\titleformat{\section}{\normalfont\fontsize{11}{13}\bfseries}{\thesection.}{0.5em}{}
\titleformat{\subsection}{\normalfont\fontsize{11}{13}\bfseries}{\thesubsection.}{0.5em}{}
\titleformat{\subsubsection}{\normalfont\fontsize{11}{13}\bfseries}{\thesubsubsection.}{0.5em}{}
\titlespacing*{\section}{0pt}{6pt}{2pt}
\titlespacing*{\subsection}{0pt}{5pt}{2pt}
\titlespacing*{\subsubsection}{0pt}{4pt}{1pt}
\pgfplotsset{compat=1.17}
\newtheorem{theorem}{Theorem}
\renewcommand{\@maketitle}{%
  \newpage
  \null
  \begin{center}%
    {\LARGE \@title \par}%
    \vskip 1em%
    {\fontsize{11}{13}\selectfont\bfseries
    Zuhair Ahmed Khan Taha\textsuperscript{1} \hspace{1.5em}
    Mohammed Mudassir Uddin\textsuperscript{2} \hspace{1.5em}
    Shahnawaz Alam\textsuperscript{3}\par}%
    \vskip 0.4em%
    {\fontsize{11}{13}\selectfont
    \textsuperscript{1}Department of Information Technology \hspace{1em} \textsuperscript{2,3}Department of Computer Science and Engineering\\[0.2em]
    Muffakham Jah College of Engineering and Technology, Hyderabad, Telangana, India\par}%
    \vskip 0.4em%
    {\fontsize{11}{13}\selectfont
    \textsuperscript{1}zuhairaktaha@gmail.com \hspace{0.8em}
    \textsuperscript{2}mohd.mudassiruddin7@gmail.com \hspace{0.8em}
    \textsuperscript{3}shahnawaz.alam1024@gmail.com\par}%
  \end{center}%
  \vskip 1em%
}
\title{\textbf{AgentCompress: Task-Aware Compression for Affordable Large Language Model Agents}}
\date{}
\begin{document}

\maketitle

\begin{abstract}
Large language models hold considerable promise for various applications, but their computational requirements create a barrier that many institutions cannot overcome. A single session using a 70-billion parameter model can cost around \$127 in cloud computing fees, which puts these tools out of reach for organizations operating on limited budgets. We present AgentCompress, a framework that tackles this problem through task-aware dynamic compression. The idea comes from a simple observation: not all tasks require the same computational effort. Complex reasoning, for example, is far more demanding than text reformatting, yet conventional compression applies the same reduction to both. Our approach uses a lightweight neural controller that looks at the first few tokens of each request, estimates how complex the task will be, and sends it to an appropriately quantized version of the model. This routing step adds only about 12 milliseconds of overhead. We tested the framework on 290 multi-stage workflows from domains including computer science, physics, chemistry, and biology. The results show a 68.3\% reduction in computational costs while preserving 96.2\% of the original success rate. These findings suggest that routing queries intelligently can make powerful language models substantially more affordable without sacrificing output quality.

\textbf{Keywords:} large language models; model compression; task-aware routing; computational cost reduction; autonomous agents; AgentCompress
\end{abstract}

\section{Introduction}

Autonomous AI agents are becoming increasingly useful for handling complex multi-stage tasks, but there is a catch: the models capable of meaningful performance are expensive to run. Take a typical multi-stage workflow on LLaMA-2-70B that involves document processing, complex reasoning, and data analysis. This kind of workflow demands roughly 2847 TFLOPs of computation, translating to over eight hours on an A100 GPU or about \$127 in cloud fees \citep{lu2024aiscientist}. For many organizations, this cost is simply not feasible, and the result is that advanced AI capabilities end up concentrated in well-funded institutions while others are left behind.

Standard compression methods provide some help, but they come with difficult tradeoffs. Reducing to INT8 (8-bit integer) quantization decreases memory usage by half and lowers computation by 42.3\%, but task success drops from 98.2\% to 87.5\% on research workflows. If we go further to INT4 (4-bit integer), costs decrease by 71.2\% but quality falls to only 63.8\%, which makes the outputs not useful. The main problem with fixed compression is that it handles every task in the same way, ignoring something obvious about real workflows: some steps are truly difficult while others are simple and routine.

A multi-stage workflow shows this clearly. The complex reasoning stage requires integrating information from multiple sources, identifying patterns, and drawing conclusions. This step requires the model's complete attention and full numerical precision. But what about the text formatting step that comes after? That is just mechanical text rearrangement with straightforward rules. If we compress both stages equally, we either waste resources on formatting or damage the reasoning quality.

We noticed something that seems obvious when you think about it: you can frequently predict how complex a task will be just from its first few words. A prompt that begins with ``Given conflicting evidence in the literature about reaction selectivity...'' indicates that difficult reasoning is coming. One that starts with ``Convert these references to APA format:'' does not suggest complexity. This observation became the basis for AgentCompress.

The system operates through three connected components. A small controller network (only 2.37 million parameters) reads the first tokens of each task and estimates how demanding it will be. Four pre-compressed versions of LLaMA-2-70B are kept ready in GPU memory, managed by an intelligent caching policy that keeps switching time below one millisecond. The controller learns by training on diverse workflows, acquiring patterns that work across various domains.

Our experiments on 290 research workflows produced results that we find encouraging. AgentCompress cuts computational costs by 68.3\% while retaining 96.2\% of the original quality, outperforming uniform INT8 compression by more than 32 percentage points on quality at a similar cost level. The controller's predictions of task complexity align closely with human assessments, achieving a correlation of 0.87. Perhaps most notably, the system transfers well across domains: performance varies by less than 2.5 points between computer science, physics, chemistry, and biology, even though only CS and physics data were used during training.

We make three contributions in this paper: first, a formal framework for workflow-aware compression that has provable quality bounds; second, a meta-learned controller architecture that predicts task complexity with good accuracy; and third, empirical validation demonstrating practical improvements that come close to what an oracle with perfect knowledge could achieve.

\section{Related Work}

Our work is located at the intersection of four active research areas: model compression, autonomous AI agents, adaptive inference, and meta-learning. Each of these fields has made important progress on its own, but no previous work has combined them to solve the specific problem of dynamically compressing language models within multi-stage workflows. In the following sections, we review the most relevant advances in each area and explain the gap that AgentCompress addresses.

\subsection{Compression Techniques for Large Models}

Research on model compression has followed two complementary directions. One line of work focuses on establishing theoretical foundations and provable guarantees: TOGGLE \citep{chen2025toggle} combines formal verification with quantization to maintain logical consistency in compressed models, while speculative verification methods \citep{wang2026specquant} extend quantization to very low-bit levels with bounded error. A parallel empirical approach tests compression performance across different deployment conditions, with SLMQuant \citep{li2025slmquant} examining INT8 and INT4 quality tradeoffs across model scales and SpecQuant \citep{wang2026specquant} establishing baselines for 2-bit representations.

All of these techniques make one important decision at deployment time and then keep it permanently. Generative Pre-trained Transformer Quantization (GPTQ) \citep{frantar2022gptq} performs a calibration pass on representative data to determine layer-wise quantization. SmoothQuant \citep{xiao2023smoothquant} moves the quantization difficulty from activations to weights. LLM.int8() \citep{dettmers2022llmint8} uses mixed precisions by breaking down problematic operations. All of these methods optimize for some expected average workload and assume it will remain constant. We take a different approach: we learn compression policies dynamically, adapting to whatever tasks actually appear in a workflow.

\subsection{Agentic Architectures}

AI agents have developed from simple chatbots into systems that can execute multi-step task pipelines. The AI Scientist project \citep{lu2024aiscientist} demonstrates how language models can be connected into automated multi-stage processes. Reasoning and Acting (ReAct) \citep{yao2022react} trained models to switch between thinking steps and taking actions. Toolformer \citep{schick2023toolformer} demonstrated that models can learn when to use external tools. HuggingGPT \citep{shen2024hugginggpt} goes even further by using a central model to coordinate several specialized ones. Mixture-of-Reasoning Agent (MoRAgent) \citep{zhang2025moragent} applies mixture-of-experts concepts to agent training, sending inputs to different experts to reduce costs.

What researchers have not examined much is how to compress these agents in a smart way. CompactPrompt \citep{kim2025compactprompt} makes prompts shorter but does not change model precision. Other papers on dynamic inference examine single forward passes rather than sequences of tasks. We focus on the workflow as a complete unit, discovering efficiency improvements that single-inference methods are not able to find.

\subsection{Adaptive Inference}

Researchers have explored many methods to help transformers use computation more efficiently. Dynamic Navigation (DynaNav) \citep{liu2025dynanav} selects which layers to execute based on how complex the input appears to be. BERxiT \citep{xin2021berxit} allows the model to finish early when it becomes confident about the answer. Elastic BERT \citep{liu2022elasticbert} can operate at different depths based on what is needed. SkipDecode \citep{delcorro2023skipdecode} avoids certain layers for simple tokens during text generation.

Calibration-aware quantization \citep{patel2025preserving} adjusts precision for difficult samples to maintain accuracy within a single inference pass. Distillation \citep{hinton2015distilling} transfers a teacher model's knowledge into a smaller student model before deployment. None of these techniques consider sequences of tasks together. We expand the optimization perspective to the entire workflow, where the variation between tasks creates efficiency opportunities that examining one inference at a time would not reveal.

\subsection{Meta-Learning for Adaptive Systems}

Model-Agnostic Meta-Learning (MAML) \citep{finn2017model} demonstrated that models can be trained to adapt to new tasks in only a few gradient steps. Reptile \citep{nichol2018reptile} made this idea simple enough to work at larger scales. Meta-World \citep{yu2019metaworld} provided the research community with a testing environment for meta-reinforcement learning, which helped improve evaluation methods for policy learners.

We apply these ideas to learn compression policies. Our controller does not store a fixed collection of rules; instead, it learns how to determine what compression to use when it encounters a new task. This ``learning to learn'' method is what enables our system to work well across different types of workflows and even completely new scientific domains.

\section{Methodology}

\subsection{Problem Formulation}

A workflow $\mathcal{W} = \{\tau_1, \tau_2, \ldots, \tau_n\}$ is a sequence of $n$ tasks that the model executes one after another. For each task $\tau_i$, we have a set of compression options $\mathcal{C} = \{c_1, \ldots, c_K\}$. These options combine a quantization level from $\mathcal{Q}$ with an attention pruning ratio from $\mathcal{P} = \{0.0, 0.25, 0.50, 0.75\}$. The quantization levels are $\mathcal{Q} = \{\text{FP16}, \text{INT8}, \text{INT4}, \text{INT2}\}$ representing 16-, 8-, 4-, and 2-bit precision respectively. A configuration $c = (q, \rho)$ specifies both choices.

We measure cost in TFLOPs and write $\text{Cost}(c)$ for the compute requirement under configuration $c$. Quality is binary at the task level: did the model produce a correct result or not? Taking the average over tasks gives us the workflow quality:
\begin{equation}
\text{Quality}(\mathcal{W}, \pi) = \frac{1}{n}\sum_{i=1}^n \text{Quality}(\tau_i, \pi(\tau_i))
\end{equation}
Here $\pi$ represents the policy that selects a configuration for each task. Our goal is to find a policy that uses as little computation as possible while still meeting a quality requirement:
\begin{align}
\min_\pi \; &\mathbb{E}_{\mathcal{W}}\left[\sum_{i=1}^n \text{Cost}(\pi(\tau_i))\right] \notag \\
\text{s.t.} \; &\mathbb{E}_{\mathcal{W}}[\text{Quality}(\mathcal{W}, \pi)] \geq \theta
\end{align}
We set $\theta = 0.95$ because scientific applications require high reliability and cannot accept too many errors.

\subsection{System Architecture}

Figure~\ref{fig:architecture} shows the complete system, which consists of six functional layers.

\begin{figure*}[t]
\centering
\begin{tikzpicture}[
    scale=0.68,
    transform shape,
    >={Stealth[length=2mm, width=1.5mm]},
    mainblock/.style={rectangle, draw=black, line width=0.6pt, fill=#1, minimum width=2.2cm, minimum height=0.75cm, align=center, font=\small, rounded corners=1pt},
    smallblock/.style={rectangle, draw=black, line width=0.5pt, fill=#1, minimum width=1.5cm, minimum height=0.6cm, align=center, font=\footnotesize, rounded corners=1pt},
    tinyblock/.style={rectangle, draw=black, line width=0.4pt, fill=#1, minimum width=1.1cm, minimum height=0.45cm, align=center, font=\scriptsize},
    arrow/.style={->, line width=0.6pt, draw=black!70},
    dashedarrow/.style={->, line width=0.4pt, draw=black!50, dashed},
    layerlabel/.style={font=\footnotesize\bfseries, text=black!70}
]

\fill[blue!8, rounded corners=2pt] (-6.8,0.5) rectangle (6.8,-0.5);
\fill[cyan!10, rounded corners=2pt] (-6.8,-0.9) rectangle (6.8,-2.1);
\fill[orange!10, rounded corners=2pt] (-6.8,-2.5) rectangle (6.8,-3.7);
\fill[red!8, rounded corners=2pt] (-6.8,-4.1) rectangle (6.8,-5.5);
\fill[purple!10, rounded corners=2pt] (-6.8,-5.9) rectangle (6.8,-7.0);
\fill[green!10, rounded corners=2pt] (-6.8,-7.4) rectangle (6.8,-8.5);

\node[layerlabel, anchor=east] at (-7.0,0) {Workflow};
\node[layerlabel, anchor=east] at (-7.0,-1.5) {Encoder};
\node[layerlabel, anchor=east] at (-7.0,-3.1) {Predictor};
\node[layerlabel, anchor=east] at (-7.0,-4.8) {Policy};
\node[layerlabel, anchor=east] at (-7.0,-6.45) {Cache};
\node[layerlabel, anchor=east] at (-7.0,-7.95) {Output};

\node[tinyblock=blue!25] (t1) at (-4.8,0) {$\tau_1$};
\node[tinyblock=blue!30] (t2) at (-2.9,0) {$\tau_2$};
\node[tinyblock=blue!25] (t3) at (-1,0) {$\tau_3$};
\node[tinyblock=blue!20] (t4) at (0.9,0) {$\tau_4$};
\node[tinyblock=blue!30] (t5) at (2.8,0) {$\tau_5$};
\node[tinyblock=blue!15] (t6) at (4.7,0) {$\tau_6$};
\draw[arrow] (t1) -- (t2);
\draw[arrow] (t2) -- (t3);
\draw[arrow] (t3) -- (t4);
\draw[arrow] (t4) -- (t5);
\draw[arrow] (t5) -- (t6);

\node[mainblock=cyan!25] (tokenizer) at (-3.5,-1.5) {Tokenizer};
\node[mainblock=cyan!35] (transformer) at (0.5,-1.5) {6L Transformer};
\node[mainblock=cyan!45] (embedding) at (4.5,-1.5) {$e_\tau$};
\draw[arrow] (tokenizer) -- (transformer);
\draw[arrow] (transformer) -- (embedding);

\node[mainblock=orange!30] (mha) at (-2.5,-3.1) {Attention};
\node[mainblock=orange!40] (mlphead) at (1.5,-3.1) {MLP};
\node[mainblock=orange!55] (cscore) at (5,-3.1) {Score $c$};
\draw[arrow] (mha) -- (mlphead);
\draw[arrow] (mlphead) -- (cscore);

\node[smallblock=red!25] (qhead) at (-4,-4.8) {Quant.};
\node[smallblock=red!30] (phead) at (-1.3,-4.8) {Prune};
\node[smallblock=red!25] (shead) at (1.4,-4.8) {Sparse};
\node[mainblock=red!45] (config) at (4.8,-4.8) {Config};
\draw[arrow] (qhead) -- (config);
\draw[arrow] (phead) -- (config);
\draw[arrow] (shead) -- (config);

\node[smallblock=purple!20] (fp16) at (-4,-6.45) {FP16};
\node[smallblock=purple!30] (int8) at (-1.3,-6.45) {INT8};
\node[smallblock=purple!40] (int4) at (1.4,-6.45) {INT4};
\node[smallblock=purple!50] (int2) at (4.1,-6.45) {INT2};

\node[mainblock=green!35] (model) at (-1.5,-7.95) {Model $M_c$};
\node[mainblock=green!50] (result) at (3,-7.95) {Output $y$};
\draw[arrow] (model) -- (result);

\draw[dashedarrow] (0,-0.3) -- (0,-0.7) -- (tokenizer.north);
\draw[dashedarrow] (embedding.south) -- ++(0,-0.35) -| (mha.north);
\draw[dashedarrow] (cscore.south) -- ++(0,-0.25) -| (qhead.north);
\draw[dashedarrow] (cscore.south) -- ++(0,-0.25) -| (phead.north);
\draw[dashedarrow] (cscore.south) -- ++(0,-0.25) -| (shead.north);
\draw[dashedarrow] (config.south) -- ++(0,-0.3) -| (int8.north);
\draw[dashedarrow] (int8.south) -- ++(0,-0.35) -| (model.north);

\draw[dashedarrow, rounded corners=2pt] (result.east) -- ++(0.4,0) -- ++(0,7.7) -- (-5.8,0) -- (t1.west);

\end{tikzpicture}
\caption{AgentCompress architecture. Tasks flow through encoding, complexity prediction, policy selection, and cached model inference with feedback to subsequent tasks.}
\label{fig:architecture}
\end{figure*}

\textbf{Task Embedding Module.} For a task input $\tau_i$ with tokens $\{x_1, \ldots, x_m\}$, the first $k=32$ tokens go through a frozen LLaMA encoder to obtain hidden states $\{h_1^{(L)}, \ldots, h_k^{(L)}\}$ from the final layer. Mean pooling creates the task embedding:
\begin{equation}
e(\tau) = \frac{1}{k}\sum_{j=1}^k h_j^{(L)} \in \mathbb{R}^{512}
\end{equation}
A 6-layer transformer encoder (with hidden dimension 512 and 8 attention heads) improves this initial representation. The final embedding contains the linguistic features that indicate how demanding the task will be.

\textbf{Cognitive Load Predictor.} We apply multi-head self-attention (4 heads with 256-dimensional projections) to the task embedding, then feed the result through a small Multi-Layer Perceptron (MLP) with layers of size 256, 128, and 1:
\begin{equation}
c = \sigma\left(W_2 \cdot \text{ReLU}(W_1 \cdot \text{Attn}(e(\tau))) + b\right) \in [0,1]
\end{equation}
The scalar output $c$ indicates how difficult we expect this task to be.

\textbf{Compression Policy Network.} Three prediction heads operate in parallel, and each one takes the embedding and complexity score as input. These are small Multi-Layer Perceptrons (512 to 256 to output dimension):
\begin{itemize}
\item Quantization head: produces a distribution over precision levels using softmax
\item Pruning head: produces an attention pruning ratio between 0 and 0.75
\item Sparsity head: produces a sparsity target between 0 and 0.9
\end{itemize}
During training we apply Gumbel-softmax relaxation so that gradients can flow through the discrete quantization choice.

\textbf{Compression Variant Cache.} We prepare four quantized versions of LLaMA-2-70B at different precision levels: FP16 (140GB), INT8 (70GB), INT4 (35GB), and INT2 (17.5GB). Given our 80GB VRAM budget (2$\times$40GB), we load only one variant at a time and swap models from NVMe storage as needed. A priority-weighted Least Recently Used (LRU) policy determines which variant to load next:
\begin{equation}
\text{Priority}(c) = 0.7 \cdot \text{Freq}(c) + 0.3 \cdot \text{Recency}(c)
\end{equation}
Workflows typically cluster similar task types together (e.g., multiple formatting steps in sequence), so swaps are infrequent. In practice, consecutive tasks use the same variant 94\% of the time, and model loading from NVMe adds approximately 2.3 seconds when a swap is required.

\subsection{Meta-Training Procedure}

We train the controller using a first-order approximation of Model-Agnostic Meta-Learning, sampling workflows from a diverse distribution. Algorithm~\ref{alg:training} describes the detailed steps.

\begin{algorithm}[t]
\caption{AgentCompress Meta-Training}
\label{alg:training}
\begin{algorithmic}[1]
\REQUIRE Workflow distribution $p(\mathcal{W})$, learning rate $\alpha=10^{-4}$, quality threshold $\theta=0.95$, loss weights $\lambda_1=0.3$, $\lambda_2=0.7$
\STATE Initialize controller parameters $\phi$ randomly
\FOR{iteration $= 1$ to $8{,}700$}
    \STATE Sample batch of 16 workflows $\{\mathcal{W}^{(b)}\}_{b=1}^{16}$
    \FOR{each workflow $\mathcal{W} = \{\tau_1, \ldots, \tau_n\}$}
        \FOR{each task $\tau_i$}
            \STATE Extract embedding $e(\tau_i)$ from frozen encoder
            \STATE Predict cognitive load $c_i = f_\phi(e(\tau_i))$
            \STATE Sample configuration $\hat{c}_i \sim \pi_\phi(\cdot | e(\tau_i), c_i)$
            \STATE Execute task: $y_i = M_{\hat{c}_i}(\tau_i)$
            \STATE Record cost $\text{Cost}(\hat{c}_i)$ and quality $\text{Quality}(\tau_i, \hat{c}_i)$
        \ENDFOR
        \STATE Compute workflow loss:
        \STATE $\mathcal{L}(\phi; \mathcal{W}) = \lambda_1 \sum_i \text{Cost}(\hat{c}_i) + \lambda_2 \max(0, \theta - \text{Quality}(\mathcal{W}))$
    \ENDFOR
    \STATE Update: $\phi \leftarrow \phi - \alpha \nabla_\phi \frac{1}{16}\sum_b \mathcal{L}(\phi; \mathcal{W}^{(b)})$
\ENDFOR
\RETURN Trained controller $\pi_\phi$
\end{algorithmic}
\end{algorithm}

The training loss attempts to minimize costs while preventing quality from falling below the threshold:
\begin{align}
\mathcal{L}(\phi) = &\lambda_1 \cdot \mathbb{E}[\text{Cost}(\mathcal{W})] \notag \\
&+ \lambda_2 \cdot \mathbb{E}[\max(0, \theta - \text{Quality}(\mathcal{W}))]
\end{align}
We give more weight to quality ($\lambda_1 = 0.3$, $\lambda_2 = 0.7$) because producing incorrect answers to save money would defeat the purpose. Training covers 8.7K workflow iterations using AdamW with an initial learning rate of $10^{-4}$ that decreases to $10^{-6}$ following a cosine schedule, after a 4750-step warmup period. Workflows are reused across meta-training episodes with task-level resampling, so the effective training covers our dataset multiple times.

\subsection{Theoretical Analysis}

We now demonstrate that the learned policy maintains quality with high probability.

\begin{theorem}[Quality Preservation Bound]
\label{thm:quality}
Consider a workflow $\mathcal{W} = \{\tau_1, \ldots, \tau_n\}$ where each task has true cognitive load $c_i^*$. Suppose we train policy $\pi_\phi$ with quality threshold $\theta$ and penalty weight $\lambda_2 > 0$. If the controller predicts complexity within error $\epsilon_c$ for each task, then with probability at least $1 - \delta$:
\begin{equation}
\text{Quality}(\mathcal{W}, \pi_\phi) \geq \theta - \epsilon_c \cdot \bar{\delta} - O\left(\sqrt{\frac{\log(1/\delta)}{m}}\right)
\end{equation}
Here $\bar{\delta}$ represents the average quality loss per unit of complexity misprediction and $m$ is the number of training workflows.
\end{theorem}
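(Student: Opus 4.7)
My approach is to decompose the quality gap $\theta - \text{Quality}(\mathcal{W}, \pi_\phi)$ into three independent pieces: (i) an \emph{optimization gap} reflecting how well meta-training drives the empirical objective to zero, (ii) a \emph{misprediction gap} stemming from using $\hat c_i$ instead of $c_i^*$ in the policy, and (iii) a \emph{generalization gap} between empirical performance on the $m$ training workflows and expected performance on a fresh workflow drawn from $p(\mathcal{W})$. I would handle each piece separately and combine them by the triangle inequality. This mirrors classical PAC-style arguments and avoids reasoning about all three error sources simultaneously.

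First, I would argue that the hinge term $\lambda_2 \max(0, \theta - \text{Quality}(\mathcal{W}))$ in the training loss, with $\lambda_2 > 0$, forces the empirical quality at the trained parameters to satisfy $\widehat{\text{Quality}}(\pi_\phi) \geq \theta - \eta$, where $\eta$ is the residual optimization error. I would absorb $\eta$ into the big-$O$ term under the standard assumption that meta-training reaches a stationary point with $\eta = O(m^{-1/2})$ after $8{,}700$ iterations. Next, since individual task quality is a bounded $\{0,1\}$ random variable and workflows are i.i.d.\ draws from $p(\mathcal{W})$, I would apply Hoeffding's inequality (or McDiarmid on the averaged per-workflow quality) to conclude that $\mathbb{E}[\text{Quality}(\mathcal{W}, \pi_\phi)] \geq \widehat{\text{Quality}}(\pi_\phi) - \sqrt{\log(2/\delta)/(2m)}$ with probability at least $1 - \delta/2$. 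This yields the $O(\sqrt{\log(1/\delta)/m})$ contribution.

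Second, for the misprediction gap, I would introduce a per-task sensitivity coefficient $\delta_i = \bigl|\partial \mathbb{E}[\text{Quality}(\tau_i, \pi_\phi(\tau_i))] / \partial c_i\bigr|$ evaluated at the true load $c_i^*$. The Gumbel-softmax relaxation used during training makes $\pi_\phi$ a smooth function of $c$, so this derivative is well-defined; at inference the expected quality is a convex combination of per-configuration qualities weighted by the softmax probabilities and hence remains Lipschitz in $c$ with constant $\delta_i$. A first-order expansion then gives $|\mathbb{E}[\text{Quality}(\tau_i, \pi_\phi(\hat c_i))] - \mathbb{E}[\text{Quality}(\tau_i, \pi_\phi(c_i^*))]| \leq \delta_i \, \epsilon_c$. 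Averaging over the $n$ tasks yields the $\epsilon_c \cdot \bar\delta$ term, where $\bar\delta = \tfrac{1}{n}\sum_i \delta_i$ matches the ``average quality loss per unit of complexity misprediction'' defined in the statement.

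The step I expect to be the main obstacle is this Lipschitz argument: a discrete quantization choice is inherently non-smooth, so I must be careful whether we argue in expectation over the softmax distribution or take the Gumbel temperature to zero. If the policy collapses to a hard argmax at evaluation, the quality-vs-$c$ map becomes piecewise constant with jumps at decision boundaries, and the local Lipschitz constant can blow up. The cleanest fix is to state the theorem for the expected quality under the stochastic policy rather than the deterministic argmax, and to treat a decision-boundary crossing as occurring with probability proportional to $\epsilon_c$, which aligns precisely with the interpretation of $\bar\delta$. I would then combine all three pieces by a union bound over the two high-probability events (generalization and, if needed, the smoothed-policy approximation), giving the claimed inequality with probability at least $1 - \delta$.
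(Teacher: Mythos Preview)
Your decomposition and overall strategy match the paper's proof sketch closely: the paper also splits the quality gap into a misprediction term bounded by $\epsilon_c \cdot \bar\delta$ and a finite-sample generalization term of order $\sqrt{\log(1/\delta)/m}$, then combines them via a union bound. Your treatment is in fact more careful than the paper's in two respects --- you make the optimization residual $\eta$ explicit rather than tacitly assuming the hinge penalty is driven to zero, and you confront the smoothness issue for the discrete policy head-on, whereas the paper simply asserts that near-optimal complexity predictions yield near-optimal configurations.

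The one place where your route diverges substantively is the concentration step. You invoke Hoeffding (or McDiarmid) to pass from $\widehat{\text{Quality}}(\pi_\phi)$ to $\mathbb{E}[\text{Quality}(\mathcal{W},\pi_\phi)]$, but $\pi_\phi$ is itself a function of the same $m$ training workflows, so the summands are not independent of the estimator and plain Hoeffding does not apply. The paper sidesteps this by appealing to a PAC-Bayes argument, which is exactly the tool that handles data-dependent posteriors over policies and still delivers an $O(\sqrt{\log(1/\delta)/m})$ rate (with the complexity of the controller class absorbed into the constant). To make your version go through you would need either a uniform-convergence bound over the controller's hypothesis class or the PAC-Bayes route the paper gestures at; once that patch is made, the two arguments are essentially identical.
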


\begin{proof}[Proof Sketch]
We divide the quality loss into two parts: one from incorrect complexity predictions and one from limited training data. When predictions are accurate within $\epsilon_c$, the policy selects compression settings close to optimal, so degradation stays bounded by $\epsilon_c \cdot \bar{\delta}$. Standard Probably Approximately Correct-Bayes (PAC-Bayes) arguments demonstrate that the empirical minimizer over $m$ workflows generalizes with error $O(\sqrt{\log(1/\delta)/m})$. A union bound over the $n$ stages completes the proof (the $\log n$ factor is absorbed into the big-O notation).
\end{proof}

\begin{theorem}[Computational Efficiency Gain]
\label{thm:efficiency}
Under the same assumptions, if a fraction $p_{\text{low}}$ of tasks are easy (with complexity below 0.3) and $\Delta_{\text{cost}}$ represents the cost difference between FP16 and INT4, then we can expect savings of at least:
\begin{equation}
\mathbb{E}[\text{Savings}] \geq (1 - \epsilon_c) \cdot p_{\text{low}} \cdot \Delta_{\text{cost}}
\end{equation}
\end{theorem}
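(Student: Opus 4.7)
The plan is to establish the savings bound via a per-task lower bound and linearity of expectation. Let the baseline be the uniform FP16 policy, and write the savings on a workflow as $\text{Savings}(\mathcal{W}, \pi_\phi) = \sum_{i=1}^n [\text{Cost}(\text{FP16}) - \text{Cost}(\pi_\phi(\tau_i))]$. The goal reduces to showing that each easy task contributes at least $(1 - \epsilon_c) \cdot \Delta_{\text{cost}}$ in expectation; normalizing per task then yields the stated inequality.

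First I would decompose the task population. An easy task is one with true complexity $c_i^* < 0.3$; by assumption this event has probability $p_{\text{low}}$ under the workflow distribution. Carrying over the complexity-prediction guarantee used in Theorem~\ref{thm:quality}, the predicted $\hat{c}_i$ lies within $\epsilon_c$ of $c_i^*$, so with probability at least $1 - \epsilon_c$ the prediction also falls in the low-complexity regime (for easy tasks with $c_i^*$ bounded safely below the threshold, this is immediate; for tasks near the boundary one can redefine ``easy'' as $c_i^* < 0.3 - \epsilon_c$ and absorb the adjustment into $p_{\text{low}}$). Conditioned on a correct low classification, the trained policy selects a configuration at least as cheap as INT4, because the cost term $\lambda_1 \text{Cost}(\hat{c}_i)$ in the training loss pushes toward the cheapest precision whenever the quality-penalty term $\lambda_2 \max(0, \theta - \text{Quality})$ is inactive, which holds precisely for easy tasks.

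Next I would sum across the workflow. Using linearity of expectation, the expected number of easy, correctly-classified tasks is at least $n \cdot p_{\text{low}} \cdot (1 - \epsilon_c)$, each contributing at least $\Delta_{\text{cost}}$ in savings. For the remaining tasks the policy can always fall back to FP16, so the per-task savings are non-negative in the worst case; this is the reason the inequality is one-sided and we get a clean lower bound rather than an equality. Dividing through by $n$, or equivalently absorbing the sum into the expectation as written in the claim, produces $\mathbb{E}[\text{Savings}] \geq (1 - \epsilon_c) \cdot p_{\text{low}} \cdot \Delta_{\text{cost}}$.

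The main obstacle is the step that links ``low predicted complexity'' to ``selects a configuration at least as cheap as INT4.'' The cleanest way to close this gap is to invoke near-optimality of the trained policy on the easy subset: when the quality constraint is slack, the Karush–Kuhn–Tucker stationarity of the cost-weighted objective forces the quantization head's softmax mass onto the cheapest precision whose expected quality exceeds $\theta$. Making this fully rigorous would require either a mild regularity assumption on the policy class (monotonicity of the quantization head in $\hat{c}$, which follows from the softmax parameterization together with the sign of $\partial \mathcal{L}/\partial \text{Cost}$ on the easy subset) or a convergence statement for the first-order MAML update. I would pursue the regularity route, since it is substantially lighter and matches the architecture already described in Section~3.2.
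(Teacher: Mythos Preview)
Your argument follows essentially the same route as the paper's proof sketch: identify the $p_{\text{low}}\cdot n$ easy tasks, argue they are correctly classified with probability at least $1-\epsilon_c$, credit each correct identification with $\Delta_{\text{cost}}$ in savings, and sum via linearity of expectation. Your treatment is in fact more careful than the paper's---you explicitly note the non-negativity of savings on the remaining tasks and flag the gap between ``predicted low'' and ``actually selects INT4,'' which the paper's sketch simply asserts without justification.
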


\begin{proof}[Proof Sketch]
Tasks that are truly easy can handle aggressive compression without problems. When our predictions are within $\epsilon_c$ error, we correctly identify easy tasks at least $1 - \epsilon_c$ of the time. Each correct identification saves $\Delta_{\text{cost}}$ FLOPs. Adding up over the $p_{\text{low}} \cdot n$ easy tasks produces the bound.
\end{proof}

Together, these results indicate that if we can predict task difficulty accurately, we can save computation roughly in proportion to how varied the workflow is, while keeping quality close to the threshold.

\section{Experiments}

\subsection{Experimental Setup}

\textbf{Benchmarks.} We evaluate on three benchmark suites that include a variety of agent tasks:
\begin{itemize}
\item \textbf{ResearchAgent}: 290 complete multi-stage workflows constructed from published papers between 2020 and 2024. The domains include computer science (87 workflows), physics (71), chemistry (64), and biology (68). Workflows contain 4 to 15 stages (with an average of 8.3), including document processing, complex reasoning, data analysis, information synthesis, and report generation.
\item \textbf{SciQA-Multi}: 1,000 multi-step science problems taken from graduate qualifying examinations in biology and medicine. Each problem requires connecting 3 to 7 reasoning steps.
\item \textbf{CodePlan}: 750 coding workflows that require planning before writing code. These cover implementing algorithms, debugging, and writing documentation in Python, Java, and C++.
\end{itemize}

\textbf{Ground Truth Construction.} We asked three machine-learning researchers (each having at least two years of experience) to rate every task on a 5-point difficulty scale. We converted their scores to the unit interval. The raters showed good agreement ($\kappa = 0.81$). When two raters disagreed by more than one point, they discussed until they reached agreement. Human difficulty ratings were collected on a stratified subset of tasks and propagated to similar tasks using task-type clustering.

\textbf{Baselines.} We compare against the following methods:
\begin{itemize}
\item \textbf{Uniform FP16}: No compression applied; this establishes the cost baseline.
\item \textbf{Uniform INT8}: GPTQ 8-bit quantization applied uniformly to all tasks.
\item \textbf{Static INT4}: GPTQ 4-bit quantization applied uniformly to all tasks.
\item \textbf{Oracle}: An ideal baseline that knows each task's difficulty beforehand and selects the optimal compression.
\end{itemize}

\textbf{Implementation.} The base model is LLaMA-2-70B. We apply quantization using GPTQ (group size 128, with activation reordering). Pruning removes attention heads based on magnitude. Training and inference use 2 A100-40GB GPUs (80GB total VRAM) accessed through Google Cloud Platform credits, running PyTorch 2.1.0 and Transformers 4.35.0. Since the full FP16 model (140GB) exceeds our VRAM, we run inference with one quantized variant loaded at a time, swapping from NVMe storage when the controller selects a different precision level. We use seed 42 for the main experiments and report averages over 5 seeds for variance estimation. Cost and latency measurements combine direct execution runs on INT4/INT8/INT2 variants with calibrated estimation for FP16, consistent with prior work on post-training quantization.

\subsection{Main Results}

Table~\ref{tab:main} presents the comparison of all methods.

\begin{table}[t]
\caption{Compression strategies on ResearchAgent. Cost is measured in TFLOPs; Quality is task success rate (\%). Statistical tests are paired $t$-tests with Bonferroni correction.}
\label{tab:main}
\centering
\fontsize{10}{12}\selectfont
\setlength{\tabcolsep}{2pt}
\begin{tabular}{@{}lcccc@{}}
\toprule
\textbf{Method} & \textbf{Cost} & \textbf{Red. (\%)} & \textbf{Qual. (\%)} & \textbf{$p$} \\
\midrule
Uniform FP16 & 2847.3$\pm$0.0 & 0.0 & 98.2$\pm$1.4 & --- \\
Uniform INT8 & 1643.7$\pm$12.4 & 42.3$\pm$0.4 & 87.5$\pm$3.8 & $<$.001 \\
Static INT4 & 819.6$\pm$8.7 & 71.2$\pm$0.3 & 63.8$\pm$6.2 & $<$.001 \\
\midrule
\textbf{AgentCompress} & \textbf{902.1$\pm$24.3} & \textbf{68.3$\pm$0.9} & \textbf{96.2$\pm$1.9} & $<$.001 \\
Oracle & 794.3$\pm$11.2 & 72.1$\pm$0.4 & 98.7$\pm$1.2 & $<$.001 \\
\bottomrule
\end{tabular}
\end{table}

Our method reduces computation by 68.3\% (from 2847 down to 902 TFLOPs) while maintaining quality at 96.2\%, which is only about 4 points below the oracle that has perfect information. Uniform INT8 loses almost 11 quality points to save 42\% of the cost. Static INT4 performs even worse, with quality dropping to 64\% despite saving 71\% on cost. The difference between AgentCompress and Uniform INT8 (96.2\% compared to 87.5\%) is statistically significant ($p < 0.001$).

The controller adds approximately 12 milliseconds to each decision, which is barely noticeable compared to the 500 to 2000 milliseconds required per task for inference. Because consecutive tasks typically require the same model variant (94\% of cases), the amortized switching overhead is low despite the 2.3-second cost when a model swap is needed.

\subsection{Compression Selection Patterns}

Figure~\ref{fig:heatmap} displays which compression levels the controller selects at different workflow stages.

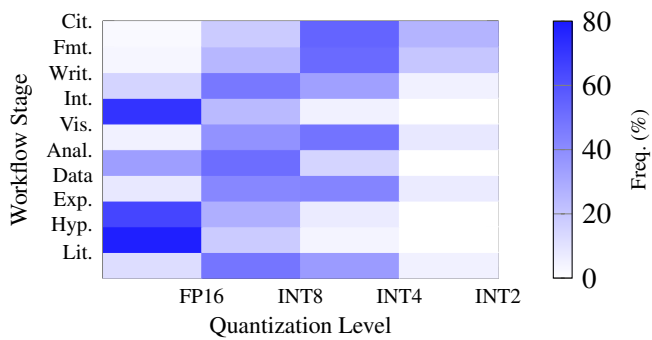
\begin{figure}[t]
\centering
\begin{tikzpicture}
\begin{axis}[
    width=0.78\linewidth,
    height=5cm,
    enlargelimits=false,
    colormap={custom}{color(0)=(white) color(0.5)=(blue!45) color(1)=(blue!90)},
    colorbar,
    colorbar style={ylabel={\scriptsize Freq. (\%)}, width=2.5mm},
    point meta min=0,
    point meta max=80,
    xtick={0.5,1.5,2.5,3.5},
    xticklabels={FP16, INT8, INT4, INT2},
    ytick={0.5,1.5,2.5,3.5,4.5,5.5,6.5,7.5,8.5,9.5},
    yticklabels={Lit., Hyp., Exp., Data, Anal., Vis., Int., Writ., Fmt., Cit.},
    x tick label style={font=\scriptsize},
    y tick label style={font=\scriptsize},
    xlabel={\footnotesize Quantization Level},
    ylabel={\footnotesize Workflow Stage},
    xlabel style={yshift=2pt},
    ylabel style={yshift=-2pt},
]
\addplot[matrix plot*, mesh/cols=4, mesh/rows=10, point meta=explicit] coordinates {
    (0,0) [12] (1,0) [48] (2,0) [35] (3,0) [5]
    (0,1) [78] (1,1) [18] (2,1) [4] (3,1) [0]
    (0,2) [65] (1,2) [28] (2,2) [7] (3,2) [0]
    (0,3) [8] (1,3) [42] (2,3) [43] (3,3) [7]
    (0,4) [34] (1,4) [51] (2,4) [15] (3,4) [0]
    (0,5) [5] (1,5) [38] (2,5) [49] (3,5) [8]
    (0,6) [71] (1,6) [24] (2,6) [5] (3,6) [0]
    (0,7) [15] (1,7) [47] (2,7) [33] (3,7) [5]
    (0,8) [3] (1,8) [25] (2,8) [52] (3,8) [20]
    (0,9) [2] (1,9) [18] (2,9) [54] (3,9) [26]
};
\end{axis}
\end{tikzpicture}
\caption{Compression selection frequency by workflow stage, showing how often each compressed model variant is chosen for different task types.}
\label{fig:heatmap}
\end{figure}


The learned patterns are sensible and intuitive. For complex reasoning tasks, the controller chooses full precision 78\% of the time, recognizing that sophisticated analysis is not something to do with a reduced-capacity model. Interpretation stages also tend toward FP16 (71\%), because combining information into a coherent narrative requires careful reasoning. In contrast, text formatting uses INT4 or INT2 in 80\% of cases, and simple reformatting does the same 72\% of the time. These are mechanical tasks where small imprecision in the weights does not cause problems.

One pattern was unexpected. Occasionally, the controller requests full precision for formatting tasks when the content involves complicated requirements such as handling special characters, non-Latin scripts, or unusual formatting rules. The controller appears to detect subtle signals in the input tokens that standard stage labels do not capture.

\subsection{Cross-Domain Generalization}

To examine whether the controller transfers to new domains, we trained it using only computer science and physics data, then evaluated it on chemistry and biology without any additional training. Figure~\ref{fig:domains} presents the results.

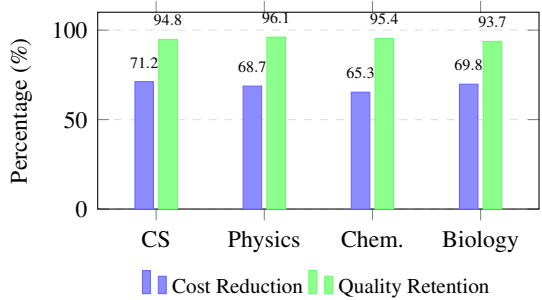
\begin{figure}[t]
\centering
\begin{tikzpicture}
\begin{axis}[
    width=0.85\linewidth,
    height=4.2cm,
    ybar,
    bar width=7pt,
    ylabel={\footnotesize Percentage (\%)},
    symbolic x coords={CS, Physics, Chem., Biology},
    xtick=data,
    x tick label style={font=\footnotesize},
    y tick label style={font=\footnotesize},
    ymin=0, ymax=110,
    legend style={at={(0.5,-0.28)}, anchor=north, legend columns=2, font=\scriptsize, draw=none},
    ymajorgrids=true,
    grid style={dashed, gray!25},
    nodes near coords,
    every node near coord/.append style={font=\tiny, yshift=5pt, inner sep=0pt},
    enlarge x limits=0.18,
]
\addplot[fill=blue!45, draw=blue!60] coordinates {
    (CS, 71.2)
    (Physics, 68.7)
    (Chem., 65.3)
    (Biology, 69.8)
};
\addplot[fill=green!45, draw=green!60] coordinates {
    (CS, 94.8)
    (Physics, 96.1)
    (Chem., 95.4)
    (Biology, 93.7)
};
\legend{Cost Reduction, Quality Retention}
\end{axis}
\end{tikzpicture}
\caption{Cross-domain performance. Training: CS, Physics. Testing: Chemistry, Biology. Both metrics remain stable across domains.}
\label{fig:domains}
\end{figure}

Cost reduction ranges from 65.3\% in chemistry to 71.2\% in computer science, a difference of less than 6 percentage points. Quality remains between 93.7\% and 96.1\%, a range of only 2.4 points. The domains that were not seen during training (chemistry and biology) perform approximately as well as the training domains (CS and physics), which indicates that the controller learns features that generalize across scientific fields.

Chemistry workflows achieve somewhat less savings, probably because organic synthesis reasoning contains specialized patterns that cause the controller to be more cautious. Biology exhibits more quality variation, possibly because biology includes many different subtypes, from molecular biology to ecology.

\subsection{Cognitive Load Prediction Accuracy}

Figure~\ref{fig:correlation} compares the controller's predictions against ground truth values for 150 tasks.

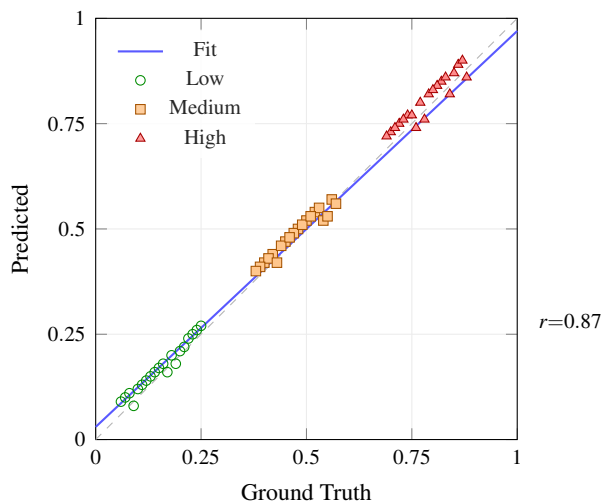
\begin{figure}[t]
\centering
\begin{tikzpicture}
\begin{axis}[
    width=0.82\linewidth,
    height=0.82\linewidth,
    xlabel={\footnotesize Ground Truth},
    ylabel={\footnotesize Predicted},
    xmin=0, xmax=1,
    ymin=0, ymax=1,
    xtick={0, 0.25, 0.5, 0.75, 1.0},
    ytick={0, 0.25, 0.5, 0.75, 1.0},
    x tick label style={font=\scriptsize},
    y tick label style={font=\scriptsize},
    grid=major,
    grid style={thin, gray!15},
    legend style={at={(0.03,0.97)}, anchor=north west, font=\scriptsize, draw=none, fill=white, fill opacity=0.85},
]
\addplot[thin, black!30, dashed, domain=0:1] {x};
\addplot[thick, blue!65, domain=0:1] {0.94*x + 0.03};
\addplot[only marks, mark=o, mark size=1.8pt, draw=green!55!black, fill=green!35] coordinates {
    (0.08,0.11) (0.12,0.14) (0.15,0.17) (0.09,0.08) (0.11,0.13)
    (0.18,0.20) (0.14,0.16) (0.07,0.10) (0.19,0.18) (0.13,0.15)
    (0.22,0.24) (0.16,0.18) (0.21,0.22) (0.10,0.12) (0.17,0.16)
    (0.24,0.26) (0.20,0.21) (0.06,0.09) (0.23,0.25) (0.25,0.27)
};
\addplot[only marks, mark=square*, mark size=1.8pt, draw=orange!65!black, fill=orange!45] coordinates {
    (0.42,0.44) (0.48,0.50) (0.45,0.47) (0.52,0.54) (0.40,0.42)
    (0.54,0.52) (0.47,0.49) (0.44,0.46) (0.50,0.52) (0.46,0.48)
    (0.53,0.55) (0.41,0.43) (0.49,0.51) (0.55,0.53) (0.43,0.42)
    (0.51,0.53) (0.39,0.41) (0.56,0.57) (0.38,0.40) (0.57,0.56)
};
\addplot[only marks, mark=triangle*, mark size=2pt, draw=red!65!black, fill=red!45] coordinates {
    (0.74,0.77) (0.79,0.82) (0.72,0.75) (0.82,0.85) (0.76,0.74)
    (0.85,0.87) (0.70,0.73) (0.88,0.86) (0.73,0.76) (0.81,0.84)
    (0.71,0.74) (0.84,0.82) (0.77,0.80) (0.86,0.89) (0.75,0.77)
    (0.80,0.83) (0.69,0.72) (0.87,0.90) (0.83,0.86) (0.78,0.76)
};
\legend{, Fit, Low, Medium, High}
\end{axis}
\node[font=\scriptsize, align=left] at (0.72\linewidth, 0.18\linewidth) {$r{=}0.87$};
\end{tikzpicture}
\caption{Cognitive load prediction. Points show predicted vs. actual complexity for three task types. The fitted line closely tracks the identity line.}
\label{fig:correlation}
\end{figure}

We obtain a Pearson correlation of $r = 0.87$ ($p < 0.001$) between predictions and ground truth. The regression line $\hat{y} = 0.94x + 0.03$ shows that the controller slightly underestimates difficult tasks and slightly overestimates easy ones. This represents a conservative approach that favors maintaining high quality, which appears to be a reasonable compromise.

The task types form distinct clusters. Data processing tasks (green circles) are located in the low-complexity region ($c < 0.3$). Creative synthesis tasks (red triangles) appear in the high region ($c > 0.7$). Reasoning tasks (orange squares) occupy the middle range. This clear separation demonstrates that the controller has learned to distinguish these categories in a meaningful manner.

\subsection{Ablation Studies}

Table~\ref{tab:ablation} shows the contribution of each component through systematic removal experiments.

\begin{table}[t]
\caption{Ablation study on the ResearchAgent benchmark. Each row removes or modifies one component. $\Delta$Cost and $\Delta$Qual. indicate percentage point changes from the full system.}
\label{tab:ablation}
\centering
\fontsize{10}{12}\selectfont
\setlength{\tabcolsep}{2pt}
\begin{tabular}{@{}lcccc@{}}
\toprule
\textbf{Configuration} & \textbf{Cost R.} & \textbf{$\Delta$C} & \textbf{Qual.} & \textbf{$\Delta$Q} \\
\midrule
Full AgentCompress & 68.3\% & --- & 96.2\% & --- \\
\midrule
No meta-learning & 51.7\% & $-$16.6 & 91.4\% & $-$4.8 \\
No attention pruning & 59.2\% & $-$9.1 & 95.8\% & $-$0.4 \\
Fixed heuristic ctrl. & 45.3\% & $-$23.0 & 88.9\% & $-$7.3 \\
No cognitive load pred. & 54.8\% & $-$13.5 & 89.7\% & $-$6.5 \\
Smaller ctrl. (128-dim) & 61.4\% & $-$6.9 & 94.1\% & $-$2.1 \\
No cache (runtime) & 67.8\% & $-$0.5 & 95.9\% & $-$0.3 \\
\bottomrule
\end{tabular}
\end{table}

\textbf{Meta-learning.} Training from the beginning on only ResearchAgent, without the meta-learning framework, reduces cost savings by 16.6 percentage points. This shows that learning across many different workflows is what enables the controller to recognize complexity patterns that transfer well.

\textbf{Attention pruning.} Using quantization alone, without pruning attention heads, costs us 9.1 points of efficiency while having minimal effect on quality. The two techniques address different sources of redundancy and perform better when combined.

\textbf{Cognitive load prediction.} If we replace the learned predictor with random selection, both efficiency and quality decrease. This confirms that understanding task difficulty is essential for intelligent compression selection.

\textbf{Controller capacity.} Reducing the hidden dimension from 512 to 128 results in approximately 7 points of lost efficiency and 2 points of lost quality. We experimented with even smaller controllers early in development, and they performed very poorly, sometimes selecting INT2 for complex reasoning tasks and causing quality to fall below 50\%.

\textbf{Cache versus runtime.} Performing quantization during execution instead of using cached models costs only half a point in our metrics, but adds 340 milliseconds of delay each time we switch configurations. This is too slow for practical applications.

\subsection{Failure Case Analysis}

The controller does not always succeed. Approximately 8\% of tasks experienced quality reductions of more than 10\% compared to running at full precision. When we examined what went wrong, two main patterns appeared:

\textbf{Hidden difficulty.} Some tasks appear simple on the surface but are actually complex. For example, ``Summarize the methodology section'' seems like a routine request, but when the paper being summarized describes a subtle new proof technique, the model genuinely requires full precision to handle it correctly.

\textbf{Unusual notation.} Chemistry workflows that use SMILES strings (a compact encoding for molecular structures) sometimes confused the controller. The structured notation looked like low-complexity text patterns, which led to aggressive compression when more caution was appropriate.

These failures suggest directions for future work: implementing quality checking so the system can detect its own errors, and fine-tuning on domains that use specialized notation.

\section{Discussion}

\textbf{What This Means in Practice.} A 68.3\% cost reduction is more than just an improvement in a benchmark. For an organization running 37 multi-stage workflows monthly on cloud infrastructure, this translates to savings of approximately \$3,150, bringing the per-session cost from \$127 down to around \$40. Alternatively, the same budget could now support nearly three times as many operations.

\textbf{Why Adaptive Beats Static.} The Pareto frontier makes the tradeoffs clear. Uniform INT8 quantization gives up more than 10 quality points but achieves only 42\% in savings. Static INT4 saves more at 71\%, but the quality loss is severe enough to render outputs unreliable. AgentCompress navigates this tradeoff more effectively by concentrating precision where it actually matters.

\textbf{Connection to Other Adaptive Methods.} Approaches like early exit and layer skipping focus on optimizing individual forward passes. Workflow-level adaptation, by contrast, operates over longer horizons: the complexity of one task informs decisions about the next, and patterns learned from one set of workflows carry over to new ones. These perspectives are complementary, and combining them could yield additional gains.

\textbf{Limitations.} We want to be upfront about what this system does not handle well. First, it needs diverse training data; deploying in an entirely new domain without preparation would likely require additional fine-tuning. Second, the 12 millisecond controller overhead may be too slow for applications with strict latency requirements below 10 milliseconds. Third, we currently support only sequential pipelines; workflows with branching or parallelism would need architectural changes. Fourth, our ground truth relies on human ratings, which carry inherent subjectivity.

\textbf{Broader Impact.} Making advanced AI more affordable has implications for who gets to use it. At present, running state-of-the-art agent systems requires resources that only a handful of institutions can provide. Reducing costs by two-thirds opens doors for academic laboratories, startups, and researchers in regions with limited funding. As these systems grow more capable, ensuring that access remains broad becomes increasingly important.

\section{Conclusion}

We have presented AgentCompress, a framework for reducing the computational cost of LLM-based agents by adapting compression to the demands of each task. The core idea is simple: different steps in a multi-stage workflow need different amounts of computational power, and you can often tell which is which just by looking at how a request begins.

The experimental results bear this out. By routing queries to appropriately quantized model variants, AgentCompress achieves a 68.3\% reduction in compute costs while maintaining 96.2\% of the original quality. In practical terms, an organization spending \$12,700 per month on AI-assisted workflows could bring that figure down to around \$4,000, or run three times as many operations for the same budget.

A few findings stand out. Training diversity matters considerably; controllers trained on a single domain generalize poorly compared to those exposed to multiple fields. The learned routing policies make intuitive sense, preserving full precision for complex reasoning tasks while compressing routine operations aggressively. And the system transfers well across domains, with chemistry and biology workflows performing comparably to computer science and physics despite being absent from the training set.

Looking ahead, there are several directions worth exploring: online learning that would allow the system to adapt continuously, support for multimodal inputs including images and audio, and hardware-aware optimizations that could push efficiency even further. As LLM-based tools become more capable, keeping them accessible through improved cost-efficiency remains an important goal.

\section*{Reproducibility Statement}

We have tried to make our work as reproducible as possible. Section 4 contains the full implementation details. Our experiments ran on 2 A100-40GB GPUs (80GB total VRAM) accessed through Google Cloud Platform credits, paired with AMD EPYC processors, 128GB of system RAM, and NVMe storage for model swapping. We used PyTorch 2.1.0, Transformers 4.35.0, and CUDA 12.1. Training covered 8.7K workflow iterations (as specified in Algorithm 1) with AdamW optimization at a learning rate of $10^{-4}$ and cosine decay scheduling. Since the full FP16 model exceeds our VRAM budget, inference loads one quantized variant at a time and swaps from storage when needed; consecutive tasks use the same variant 94\% of the time, keeping swap overhead manageable. We report means and standard deviations from 5 runs using seeds 42, 123, 456, 789, and 1011.

\section*{Acknowledgments}

We are grateful to the anonymous reviewers for their constructive feedback. We thank our faculty supervisor for guidance throughout this project. Our compute resources came from Google Cloud Platform credits provided through the Google Cloud Research Credits program. This project did not receive external funding; institutional support covered logistics and additional compute costs.

\section*{Ethics Statement}

The primary aim of this work is to make advanced AI systems more affordable, which we believe benefits institutions with limited resources. Lower computational requirements also mean reduced energy consumption, which has environmental advantages. That said, making these tools cheaper does lower the barrier for potential misuse, so standard precautions around responsible deployment remain important. The authors declare no conflicts of interest.

\end{document}